\DeclareMathOperator*{\argmin}{arg\,min}
\newtheorem{corollary}{Corollary}
\newtheorem{definition}{Definition}
\newtheorem{proposition}{Proposition}
\newcommand{\context}[1]{\textcolor{black}{#1}}
\newcommand{\challenge}[1]{\textcolor{black}{#1}}
\newcommand{\proposal}[1]{\textcolor{black}{#1}}
\newcommand{\evaluation}[1]{\textcolor{black}{#1}}
\newcommand{\paperdesc}[1]{\textcolor{black}{#1}}
\newcommand{\revista}[1]{\textcolor{black}{#1}}
\title{Improving {$(\alpha, f)$-}Byzantine Resilience in Federated Learning via layerwise aggregation and cosine distance
}
\author{
  Mario García-Márquez, Nuria Rodríguez-Barroso \\
  Department of Computer Science and Artificial Intelligence,\\ Andalusian Research Institute in Data Science \\
  and Computational Intelligence (DaSCI) \\
  University of Granada \\
  Granada\\
  \texttt{\{mariogmarq, rbnuria\}@ugr.es} \\
   \And
  M.V. Luzón \\
    Department of Software Engineering,\\ 
    Andalusian Research Institute in Data Science \\
  and Computational Intelligence (DaSCI) \\
  University of Granada \\
  Granada\\
  \texttt{luzon@ugr.es} \\
  \And
  Francisco Herrera \\
  Department of Computer Science and Artificial Intelligence,\\ Andalusian Research Institute in Data Science \\
  and Computational Intelligence (DaSCI) \\
  University of Granada \\
  Granada\\
  \texttt{herrera@decsai.ugr.es} \\
}
\begin{document}
\maketitle

\begin{abstract}
\revista{The rapid development of artificial intelligence systems has amplified societal concerns regarding their usage, necessitating regulatory frameworks that encompass data privacy.  Federated Learning (FL) is posed as potential solution to data privacy challenges in distributed machine learning by enabling collaborative model training {without data sharing}. However, FL systems remain vulnerable to Byzantine attacks, where malicious nodes contribute corrupted model updates. While Byzantine Resilient operators have emerged as a widely adopted robust aggregation algorithm to mitigate these attacks, its efficacy diminishes significantly in high-dimensional parameter spaces, sometimes leading to poor performing models. This paper introduces Layerwise Cosine Aggregation, a novel aggregation scheme designed to enhance robustness of these rules in such high-dimensional settings while preserving computational efficiency. A theoretical analysis is presented, demonstrating the superior robustness of the proposed Layerwise Cosine Aggregation compared to original robust aggregation operators. Empirical evaluation across diverse image classification datasets, under varying data distributions and Byzantine attack scenarios, consistently demonstrates the improved performance of Layerwise Cosine Aggregation, achieving up to a 16\% increase in model accuracy.}
\end{abstract}

\keywords{Federated Learning \and Robust Aggregation \and Machine Learning \and Krum}

\section{Introduction}
\context{The rapid proliferation of Artificial Intelligence (AI) technologies has garnered significant societal and regulatory interest. The European Union AI Act~\cite{euact} serves as a prominent example of this regulatory response, underscoring the need to address the application of AI systems, particularly within high-risk domains. Recognizing the still unclear understanding of the broader societal impacts of AI, the concept of trustworthy AI~\cite{trustworthy} has emerged as a critical paradigm. Trustworthy AI is based on the foundational principles of ethics, legality, and technical robustness. Specifically, trustworthy AI is implemented adhering to seven interconnected requirements: (1) human agency and oversight, (2) robustness and safety, (3) privacy and data governance, (4) transparency, (5) diversity and non-discrimination, (6) social and environmental well-being, and (7) accountability.}

\context{Addressing the critical challenges of privacy and data governance, within the framework of regulations such as the GDPR~\cite{gdpr}, Federated Learning (FL)~\cite{mcmahan-2017, tutorialnuria} has gained recognition as a robust and increasingly adopted solution. FL employs a distributed learning mechanism. This mechanism facilitates collaborative model training across multiple clients, avoiding the need for direct data sharing, and consequently preserving client privacy during both training and deployment. However, it is crucial to acknowledge that despite these intrinsic privacy guarantees, FL is still susceptible to adversarial manipulations that can compromise data and model integrity.}

\challenge{Byzantine attacks pose a significant threat to the robustness and security of FL systems. Although considerable academic effort has been directed to address this vulnerability~\cite{surveynuria, zhao23, zukai2025}, a fundamental contribution in this area is the concept of $(\alpha, f)$-Byzantine resilience~\cite{krum}. Based on this foundation, numerous robust aggregation operators have been proposed~\cite{krum, bulyan, nuria22, surveynuria}. However, recent investigations have revealed that several of these operators exhibit susceptibility to attacks, particularly in high-dimensional scenarios~\cite{LittleIsEnough, signguard}. Furthermore, comparative evaluations suggest that the empirical performance of these robust operators is often suboptimal compared to less robust aggregation techniques, such as FedAvg~\cite{krum, bulyan}. This performance gap raises concerns regarding their practical applicability in real-world Federated Learning deployments.}

\proposal{{This study undertakes an investigation into the $(\alpha, f)$-Byzantine resilience properties of well-established aggregation operators, including Krum, Bulyan, and GeoMed~\cite{krum, bulyan}. Furthermore, it introduces a novel aggregation scheme specifically designed to address their limitations when applied to high-dimensional data. High-dimensional settings are often subject to the challenges associated with the curse of dimensionality, where substantial variations in a limited number of coordinates may result in only marginal changes in the overall vector magnitude or distance relative to other vectors. To mitigate this issue, the study proposes leveraging layerwise aggregation to decompose the aggregation problem into more manageable, lower-dimensional sub-problems, facilitating a more efficient and magnitude-sensitive aggregation process. Furthermore, inspired by the demonstrated success of cosine similarity (and consequently, cosine distance) in sparse, high-dimensional contexts~\cite{llama}, this research explores the substitution of Euclidean distance with cosine distance, premised on the rationale that it produces comparable data ordering when applied to normalized data. The proposed Layerwise Cosine aggregation rule scheme, which integrates both layerwise aggregation and cosine distance, demonstrably outperforms the conventional application of baseline operators in terms of both empirical performance and theoretical robustness, while maintaining its core properties. Importantly, this performance enhancement is achieved without introducing additional computational overhead, making the proposal a compelling and resource-efficient alternative to existing baseline operators.}}

\evaluation{To  evaluate the efficacy of the proposed methodology, a comprehensive suite of image classification experiments was conducted utilizing the EMNIST, Fashion-MNIST, CIFAR-10, and CelebA-S datasets. The experimental design incorporated both Independent and Identically Distributed (IID) and non-IID data distributions across the simulated client population. Consistent with the established evaluation paradigm for the robust operators, the primary focus of this assessment was directed towards Byzantine attack scenarios. A comprehensive analysis was performed comparing the original operators, namely Krum~\cite{krum}, Bulyan~\cite{bulyan} and GeoMed~\cite{bulyan}, their result of the Layerwise Cosine Aggregation scheme and its partial applications, aimed at empirically validating the anticipated performance enhancements. The empirical results show an improvement with respect to the baseline operator in every scenario showing up to 16\% improved accuracy.}

\evaluation{To  assess the effectiveness of the proposed methodology, a comprehensive set of image classification experiments was performed, employing the EMNIST, Fashion-MNIST, CIFAR-10, and CelebA-S datasets. The experimental design encompassed both Independent and Identically Distributed (IID) and non-IID data distributions across the simulated client population. In accordance with the established evaluation paradigm for robust aggregation operators, the primary emphasis of this assessment was placed on Byzantine attack scenarios. A detailed comparative analysis was performed that evaluated the performance of the original operators, namely Krum~\cite{krum}, Bulyan~\cite{bulyan} and GeoMed~\cite{bulyan}, along with the proposed layer-wise cosine aggregate scheme and its partial applications. This analysis aimed to empirically validate the anticipated performance gains. The empirical results demonstrate a consistent improvement over the baseline operators in all evaluated scenarios, with accuracy gains of up to 16\% observed.}

\paperdesc{The subsequent sections of this paper are organized as follows. Section~\ref{sec:background} provides the necessary background knowledge to follow the rest of the paper, including formal definitions of FL (Section~\ref{sec:fl}), Byzantine attacks (Section~\ref{sec:byzantine}), the notion of $(\alpha,f)$ -Byzantine resilience (Section~\ref{sec:krum}), and some of the most popular robust operators (Section~\ref{sec:rar}). The proposed aggregation scheme and its theoretical properties are presented in Section~\ref{sec:proposal}. Section~\ref{sec:expsetup} details the experiments performed, including a description of the datasets and models (Section~\ref{sec:datasets}), a description of the Byzantine attacks implemented (Section~\ref{sec:labelflipping}), the baselines used for comparison (Section~\ref{sec:baselines}) and additional relevant implementation details (Section~\ref{sec:details}). Section~\ref{sec:analysis} provides a comparative analysis of the experimental results, considering two distinct cases: the absence of adversarial clients (Section~\ref{sec:noattack}) and the presence of adversarial clients (Section~\ref{sec:underattack}). Finally, concluding remarks are presented in Section~\ref{sec:conclusions}.}

\section{Background}\label{sec:background}
This section aims to provide the background required to follow the rest of the work.

\subsection{Federated Learning}\label{sec:fl}
FL is a distributed machine learning paradigm in which multiple entities collaborate to train a global model without explicitly exchanging data, preserving the privacy of each entity~\cite{mcmahan-2017}. FL operates in two distinct phases:
    \begin{enumerate}
        \item \textbf{Training Phase.} In this phase, each client shares information derived from their local data, without revealing the raw dataset itself, to jointly train a machine learning model. This resulting global model may be stored on a single server, a designated client, or distributed among the clients.
        \item \textbf{Inference Phase.} In this phase, clients collaborate to apply the collaboratively learned model to new instances.
    \end{enumerate}
Both phases can be performed synchronously or asynchronously, depending on various factors such as client availability. Formally~\cite{tutorialnuria}, an FL scenario can be modeled as follows. We consider a set of $n$ clients or data owners, denoted by $\{C_1, \dots, C_n\}$, where each client $C_i$ possesses a local dataset $D_i$. Each client $C_i$ also maintains a local model $L_i$, parameterized by weights $V_i \in \mathbb{R}^d$. The primary objective of FL is to train a global model $G$ by leveraging the local datasets of the clients through an iterative process, to which each iteration is called a round of learning.

In a given round $t$, each client trains its local model using its local dataset $D_i^t$, resulting in a local model $L_i^t$ with updated parameters $V_i^t$. These parameters $V_i^t$ are then shared with a central parameter server to compute the global model $\mathcal{G}^{t+1}$. The global model{'s parameters are} computed from aggregating the distinct local parameters $V_1^t, \dots, V_n^t$ using a predefined aggregation function $\Delta$. Formally, the global model{'s parameters $V_{\mathcal{G}}^{t+1}$} are computed as:
\begin{equation}
    {V_{\mathcal{G}}}^{t+1} = \Delta(V_1^t, \dots, V_n^t).
    \label{eq:global_model_update}
\end{equation}
Subsequently, the local model $L_i^{t+1}$ is set to the global model ${\mathcal{G}}^{t+1}$ for every client $i$. This process is repeated until a predefined stopping criterion is met. Upon termination, the global model ${\mathcal{G}}$ encapsulates the knowledge learned from all individual clients.

\subsection{Byzantine Attacks}\label{sec:byzantine}
FL, as a specialized instance of machine learning, inherits the susceptibility of its parent field to adversarial attacks that seek to degrade performance or compromise privacy. The existing literature categorizes these attacks through several criteria~\cite{surveynuria}, including the attacker's knowledge of the system, the manipulation of model behavior, and the attack's objective. Within the latter taxonomy, untargeted attacks aim solely to reduce the model's performance on the primary learning task. A particularly challenging case of this attack is the Byzantine attack~\cite{byzantinegenerals, Hu2021ChallengesAA}, in which a subset of clients submit arbitrary updates. These updates are typically generated randomly or derived from models trained on manipulated data, effectively producing random updates.

These attacks are frequently used in conjunction with model replacement techniques~\cite{howtobackdoor}. This is due to the fluctuating proportion of adversarial clients, which can prevent the mitigation of malicious updates, as the sheer number of benign client updates may not effectively counteract the influence of compromised updates.
\subsection{Byzantine resilience}\label{sec:krum}
The prevalence of Byzantine attacks has motivated extensive research to improve the resilience of FL systems against these malicious intrusions. Many studies concentrate on the deployment of defensive strategies against Byzantine attacks on the server, particularly within the aggregation operator used to compute the parameters of the global model~\cite{surveynuria}. A fundamental question is to establish the conditions under which an aggregation operator or rule can be considered robust against Byzantine attacks.  In this context, the most widely used definition is that of a $(\alpha, f)$-Byzantine Resilient rule~\cite{krum}, detailed in Definition~\ref{def:alphafbyz}.

\begin{definition}\label{def:alphafbyz}
    Let $\alpha \in [0, \frac{\pi}{2}]$ be any angle and $f \in \{1, \ldots, n-1 \}$ any integer. Let $n \in \mathbb{N}$, $V_i \in \mathbb{R}^d$ with $1 \le i \le n-f$ be independent, identically distributed vectors, with $V_i \sim G$ with $\mathbb{E}G=g$. Let $B_k \in \mathbb{R}^d$ with $n-f+1 \le k \le n$ be random vectors, possibly dependent between them and the vectors $V_i's$. An aggregation rule $\mathcal{F}$ is said to be $(\alpha, f)$-Byzantine resilient if, for any $1 \le j_1 \le \ldots \le j_f \le n$, the vector:
    \begin{equation*}
        F = \mathcal{F}(V_1, \ldots, B_1, \ldots, B_f, \ldots, V_n)
\end{equation*} satisfies (i) $\langle \mathbb{E}F, g\rangle \ge (1 - \sin \alpha)||g||^2$ and (ii) for $r \in \{2, 3, 4\}$, $\mathbb{E}||F||^r$ is bounded by linear combinations of terms of the form $\mathbb{E}||G||^{r_1} \cdot \ldots \cdot ||G||^{r_{n-1}}$ with $r_1 \cdot \ldots \cdot r_{n-1}=r$.
\end{definition}

We will not dive into the theoretical details of this definition, but one can intuitively see it that given $f$ Byzantine vectors, the expected output of the rule should deviate no more than an angle $\alpha$ from the true gradient. The second condition stated in the definition is required to transfer the dynamics of convergence of SGD to the operator~\cite{krum, eon1998}. Also, note that $\alpha$ and $f$ may vary between rules, thus giving us also a sense of how robust a rule may be compared to other.

\subsection{Robust Aggregation Rules}\label{sec:rar}
The notion of $(\alpha, f)$-Byzantine Robustness has been widely adopted in the literature as a basis for developing new robust aggregation rules. In particular, the Krum aggregation operator~\cite{krum} stands out as the rule most frequently encountered in this body of work, and importantly, it was the operator that originally established the definition of $(\alpha, f)$-Byzantine Robustness. The Krum operator, $KR$, is defined as:

\begin{equation*}
    KR(V_1, \ldots, V_n) = \argmin_{i \in \{1, \ldots, n\}} s(i)
\end{equation*}

where $s(i)$ is defined as
\begin{equation*}
    s(i) = \sum_{i \to j} || V_i - V_j ||^2.
\end{equation*}

Here, $f$ is an hyperparameter to be taken into account, which denotes the expected amount of Byzantine vectors in the updates set. The original paper~\cite{krum} proves the theoretical Byzantine resistance when $f < \frac{n}{2} - 1$.

Other common rules fundamented upon this definition include Bulyan~\cite{bulyan} and GeoMed~\cite{bulyan, rousseeuw85}. Bulyan is a robust aggregation rule designed to enhance resilience against Byzantine attacks in distributed systems and federated learning. It operates through a two-step process: first, a subset of updates is selected using Krum, favoring those least affected by outliers or malicious contributions. Then, a coordinate-wise trimmed mean is applied to the selected updates, eliminating extreme values to produce a robust aggregated result. GeoMed is an aggregation rule inspired by the Geometric Median~\cite{rousseeuw85}. To address the high computational cost of the Geometric Median, GeoMed instead selects the medoid of the updates. The medoid is the point within a dataset that minimizes the sum of distances to all other data points. Although not formally proven to be an $(\alpha, f)$-Byzantine resilient rule, GeoMed is considered a strong candidate and is frequently treated as such in the literature~\cite{bulyan}.

\section{{Layerwise Cosine aggregation rule}}\label{sec:proposal}
This section introduces the Layerwise Cosine {aggregation rule} concept, our proposed approach to improve $(\alpha, f)$-Byzantine Resilient Rules while sustaining their robustness.  We begin by outlining the limitations of specific operators in Section~\ref{sec:limitations}. Following this, we detail our proposal and perform a rigorous theoretical analysis of the Byzantine resilience of the Layerwise Cosine aggregation rule scheme in Section~\ref{sec:resilence}.

\subsection{Limitations of operators}\label{sec:limitations}
Byzantine resilient operators, such as Krum or Bulyan, are widely employed in FL due to their ability to enhance resilience against Byzantine attacks~\cite{surveynuria}. However, it is increasingly acknowledged that these methods can suffer from a reduced performance compared to alternatives such as FedAvg, especially when applied to high-dimensional models~\cite{krum}. Recent investigations~\cite{LittleIsEnough} have specifically highlighted the vulnerability of Krum, and, by extension, derived approaches such as Bulyan, to attacks that exploit high dimensionality. In these scenarios, operators may struggle to effectively identify subtle but critical adversarial manipulations spread across many dimensions.

\begin{figure}[ht]
    \begin{subfigure}{0.66\linewidth}
        \centering
        \includegraphics[width=\linewidth]{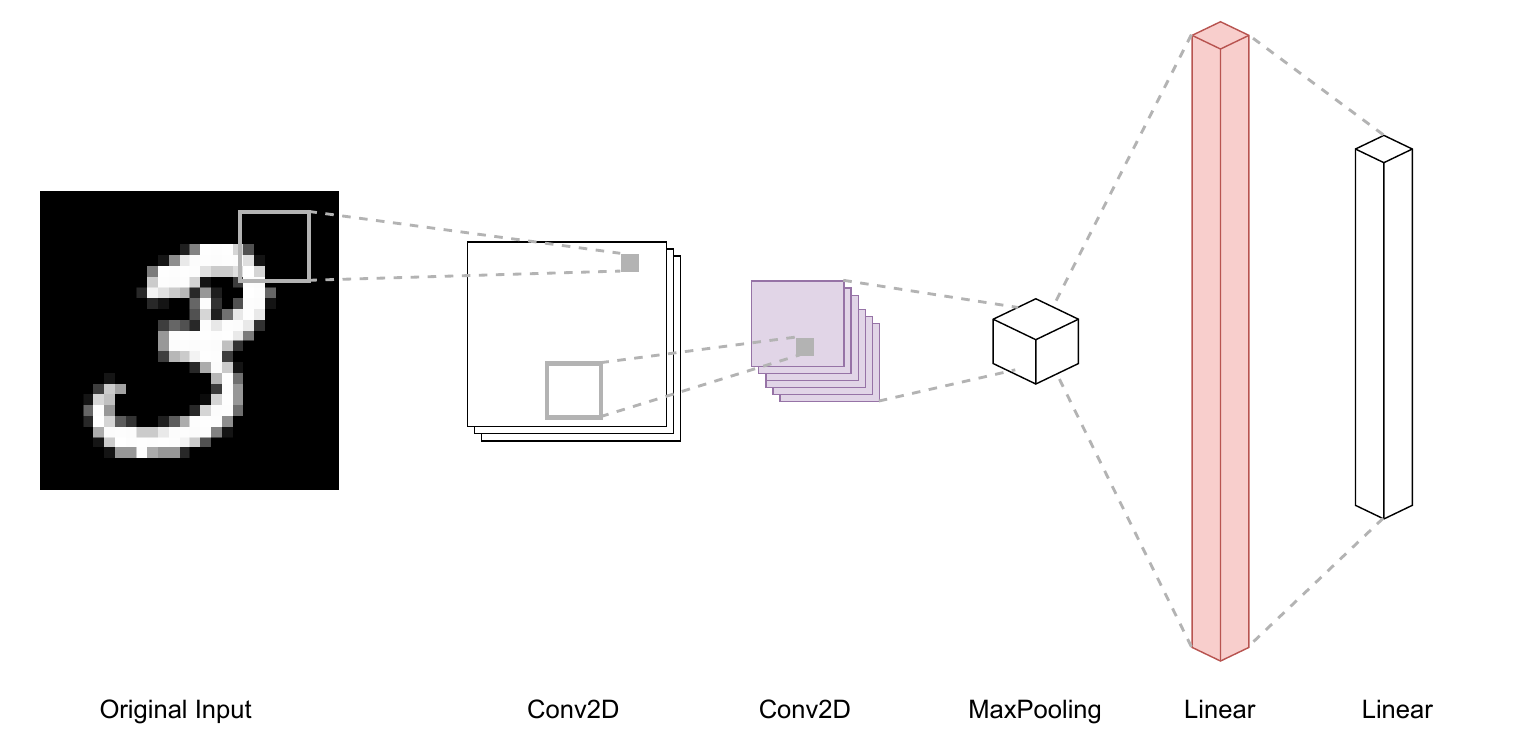}
        \caption{Neural Network Diagram.}
    \end{subfigure}
    \begin{subfigure}{0.33\linewidth}
        \centering
        \includegraphics[width=\linewidth]{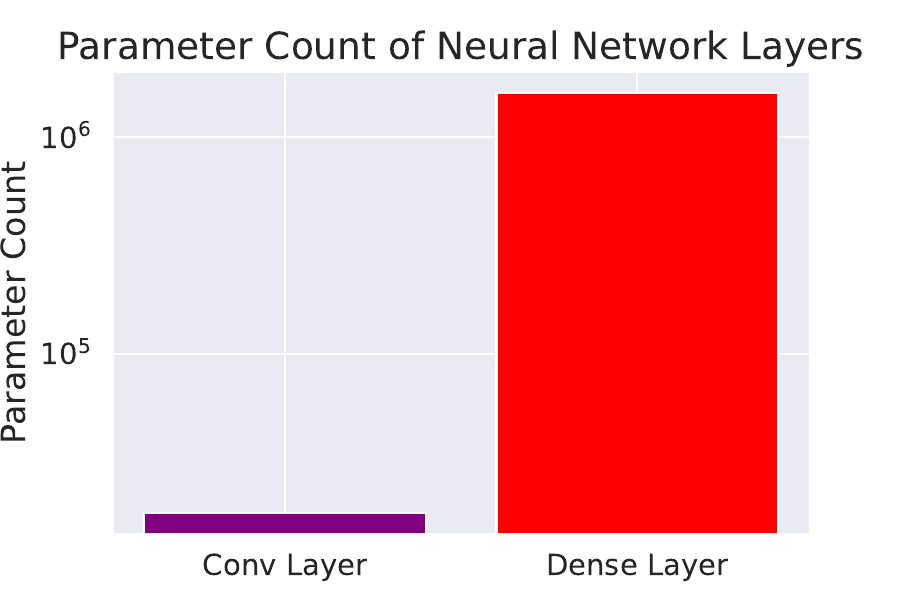}
        \caption{Histogram for parameters in layers.}
    \end{subfigure}
    \caption{{Parameter Distribution across Layers in a Two-Layer CNN used in some of the experiments. This figure highlights the significant parameter imbalance inherent in a basic two-layer CNN.  A histogram directly compares the parameter count of the second convolutional layer (purple) and the first dense layer (red), revealing an order of magnitude difference.  This disparity underscores the typical parameter distribution imbalance in such architectures.  Detailed architectural specifications are provided in Section~\ref{sec:details}.}}
    \label{fig:imbalance}
\end{figure}

A clear example of the challenges facing these aggregation operators arises in the context of aggregation of computational neural networks (CNNs). CNNs naturally exhibit an imbalance in the number of parameters across different layers, with convolutional layers typically having fewer parameters than dense layers. This parameter imbalance can distort the weighting process in algorithms for these operators. The result can be a misalignment between the actual importance of each layer to the network's overall performance and the weight assigned by the aggregation operator, which tends to be skewed by the sheer number of parameters in a layer. {A visual example of this problem can be seen in Figure~\ref{fig:imbalance}}.

Furthermore, even when dealing with architectures that do not exhibit significant parameter imbalances across layers, such as within Dense layers or Multilayer Perceptrons, aggregation rules like Krum and GeoMed often operate on data that exhibit sparsity. This sparsity introduces a notable difficulty because Euclidean distance, the fundamental metric used in Krum's selection and conceptually related to GeoMed's medoid choice, is not ideally suited for capturing meaningful distances within sparse data representations~\cite{llama}.

\subsection{\revista{Layerwise Cosine Aggregation}}\label{sec:resilence}
In this section we will introduce our proposal, Layerwise Cosine Aggregation, a framework for improving robust aggregation rules, and provide theoretical results that our proposal is $(\alpha, f)$-Byzantine Resilient under the same conditions as the original rules. This result indicates that the improvements made to any Byzantine Resilient operator lead to a valid robust aggregation rule for Byzantine learning.


Proposition \ref{th:layerwise} formally proves that the layerwise application of \revista{an $(\alpha, f)$-Byzantine Resilient rule}, also satisfies the criteria for Byzantine Resilience. From the proof of this proposition we infer that the bound shown in condition (i) of Definition~\ref{def:alphafbyz} is improved over the original rule.

Subsequently, we explain how replacing the Euclidean distance within a given operator with the cosine distance, in conjunction with median gradient clipping, also results in a $(\alpha, f)$-Byzantine resilient rule.

Therefore, to address the limitations mentioned above, we propose a novel approach, \textbf{Layerwise Cosine Aggregation}, which uses layer-wise aggregation to enhance the robustness of the given operator \revista{in conjunction with cosine distance and median gradient clipping}. As illustrated in Figure~\ref{fig:layerwise}, this approach decomposes the aggregation process into a series of layerwise sub-problems, effectively reducing the dimensionality of the parameter space considered by the original operator. By adopting a layer-wise strategy, we ensure that the robust operator accurately weights each layer based on its true contribution to model utility, rather than being influenced by the number of parameters within that layer.

\begin{figure*}[ht]
    \centering
    \includegraphics[width=0.7\linewidth]{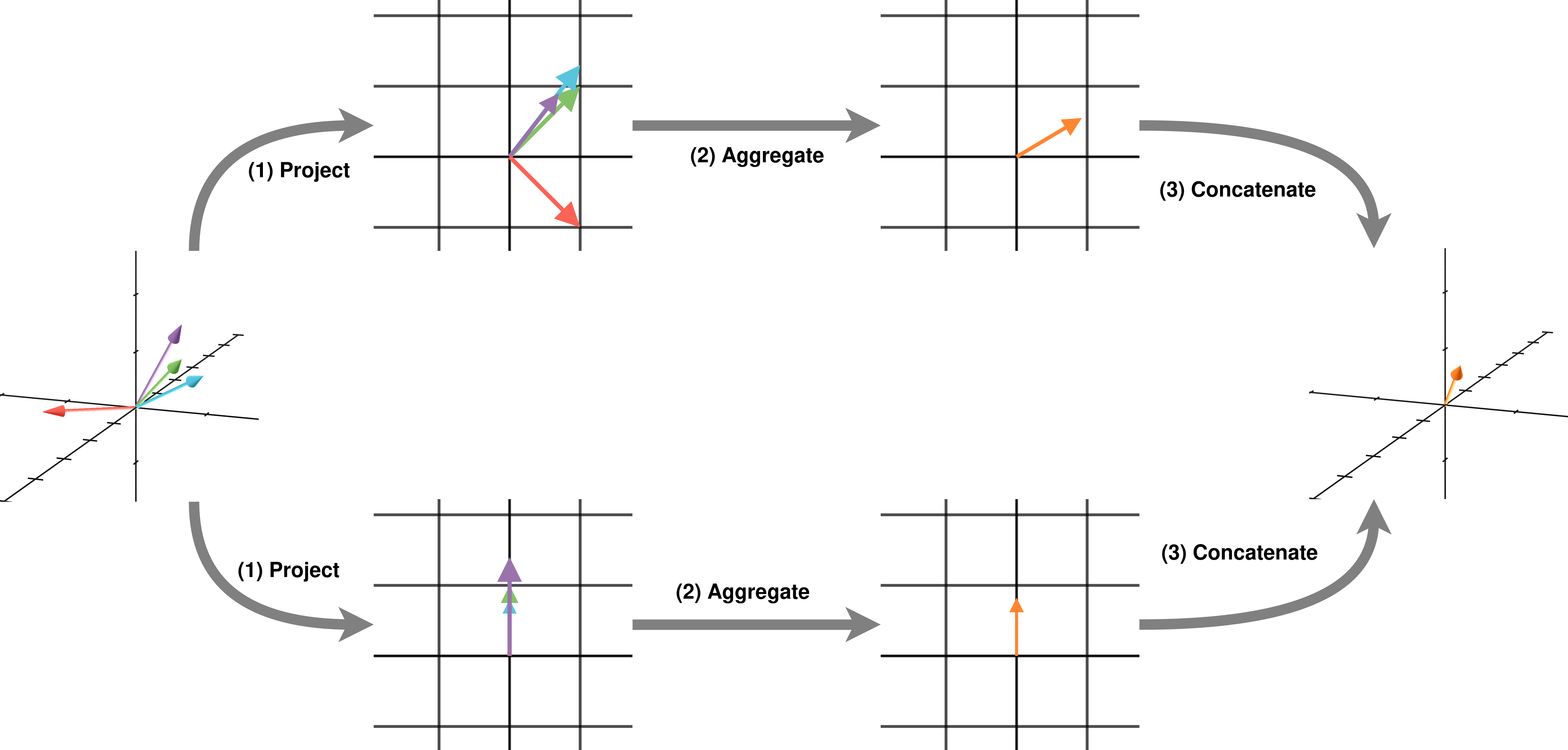}
    \caption{In a layerwise aggregation approach, original vectors are projected into orthogonal subspaces that partition the original space. Each set of projected vectors is then aggregated, and the resulting aggregated vectors are concatenated to produce a vector in the original space.}
    \label{fig:layerwise}
\end{figure*}

\revista{Furthermore, even with the decomposition of the problem into subproblems, these subproblems may still present challenges for the operator. This difficulty stems from the significant imbalance between the number of updates and the parameter count, leading to a scenario where updates populate only a small portion of the parameter space. The cosine distance (and cosine similarity) has demonstrated effectiveness in such circumstances~\cite{llama}. However, given that the cosine distance does not consider the magnitudes of updates, we incorporate median gradient clipping to address this limitation. The clipping to the median of the norms is employed as it consistently resides within the benign set~\cite{LittleIsEnough}. In particular, these enhancements introduce minimal additional computational overhead, requiring approximately equivalent memory and computational resources as the original chosen operator.}


First, we will formally define the layer-wise application of a rule. Let $d=m_1 + \ldots + m_k$. We will call $k$ the numbers of layers. Given an aggregation rule $\mathcal{F}$, we define the layerwise application of $\mathcal{F}$, denoted by $L\mathcal{F}$, as
\begin{equation*}
    L\mathcal{F}(V_1, \ldots, V_n) = (\mathcal{F}(V_{1,1}, \ldots, V_{n,1}), \ldots, \mathcal{F}(V_{1,k}, \ldots, V_{n,k}))
\end{equation*}
where $V_{i,j}$ denotes the projection of the vector $V_i$ to the $j$-th element of the partition $R^{m_1} \times \ldots \times R^{m_k}$ of $R^d$.

\begin{proposition}\label{th:layerwise}
    The layerwise application of an $(\alpha, f)$-Byzantine Resilient rule $\mathcal{F}$, $L\mathcal{F}$, is also an $(\alpha, f)$-Byzantine Resilient rule.
\end{proposition}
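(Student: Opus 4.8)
The plan is to exploit the orthogonal decomposition $\mathbb{R}^d = \mathbb{R}^{m_1}\times\cdots\times\mathbb{R}^{m_k}$ and verify the two conditions of Definition~\ref{def:alphafbyz} one layer at a time. Write $F = L\mathcal{F}(V_1,\ldots,V_n) = (F_1,\ldots,F_k)$ with $F_j = \mathcal{F}(V_{1,j},\ldots,V_{n,j})$, and let $\pi_j$ denote the projection onto the $j$-th block $\mathbb{R}^{m_j}$. First I would observe that for each fixed layer $j$ the projected benign vectors $V_{1,j},\ldots,V_{n-f,j}$ are again i.i.d., with common law $G_j$ and mean $g_j := \mathbb{E}[V_{i,j}] = \pi_j(g)$, while the projected Byzantine coordinates $B_{k,j}$ remain arbitrary and possibly mutually dependent. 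Consequently each layer is itself a legitimate instance of the hypotheses of Definition~\ref{def:alphafbyz} with the same $n$ and $f$, so the $(\alpha,f)$-resilience of $\mathcal{F}$ may be invoked for every $F_j$.

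For condition (i), I would use orthogonality of the partition, which gives $\langle \mathbb{E}F, g\rangle = \sum_{j=1}^k \langle \mathbb{E}F_j, g_j\rangle$ and $\|g\|^2 = \sum_{j=1}^k \|g_j\|^2$. Applying the per-layer guarantee $\langle \mathbb{E}F_j, g_j\rangle \ge (1-\sin\alpha)\|g_j\|^2$ and summing over $j$ yields $\langle \mathbb{E}F, g\rangle \ge (1-\sin\alpha)\|g\|^2$, which is exactly condition (i) for the same $\alpha$. I would further note that because the angle guaranteed by a typical rule degrades with dimension, each lower-dimensional layer may satisfy its inner-product bound with a sharper angle $\alpha_j \le \alpha$; propagating these tighter per-layer constants through the same summation is precisely what gives the improved bound alluded to after the statement.

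The more delicate point is condition (ii), since $\|F\|^r$ does not split additively across layers when $r$ is odd. Here I would begin from the orthogonal identity $\|F\|^2 = \sum_{j=1}^k \|F_j\|^2$ and invoke the power-mean (Jensen) inequality $\bigl(\sum_{j} a_j\bigr)^{p} \le k^{p-1}\sum_{j} a_j^{p}$ with $p = r/2 \ge 1$ and $a_j = \|F_j\|^2$, obtaining
\begin{equation*}
    \mathbb{E}\|F\|^r \le k^{r/2-1}\sum_{j=1}^k \mathbb{E}\|F_j\|^r .
\end{equation*}
Each summand $\mathbb{E}\|F_j\|^r$ is bounded, via the moment part of the resilience of $\mathcal{F}$, by a linear combination of products of moments of $G_j$ of total order $r$. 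Since projection is norm-nonincreasing, $\|G_j\| = \|\pi_j(G)\| \le \|G\|$, hence $\mathbb{E}\|G_j\|^{s} \le \mathbb{E}\|G\|^{s}$ for every exponent $s$; substituting these transfers every bound into the required form in terms of the moments of $G$, and the scalar factor $k^{r/2-1}$ preserves the admissible linear-combination structure.

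The main obstacle I anticipate is exactly this condition (ii) for $r \in \{3,4\}$: the naive route of expanding $\|F\|^4$ and bounding cross-terms $\mathbb{E}[\|F_j\|^2\|F_l\|^2]$ by Cauchy--Schwarz does not obviously reproduce the prescribed moment family, and one cannot appeal to independence across layers because the Byzantine vectors may be correlated between subspaces. Channelling the estimate through the single Jensen inequality above sidesteps all cross-terms and is the cleanest way I see to keep the bound inside the admissible class; the only remaining care is the bookkeeping that verifies the projected benign family genuinely inherits the i.i.d. and bounded-moment hypotheses of the definition.
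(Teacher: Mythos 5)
Your proof is correct, and its overall architecture matches the paper's: condition (i) is verified by splitting the inner product across the orthogonal layer decomposition and summing the per-layer guarantees, and condition (ii) by transferring moment bounds from the projected distributions $G_j$ back to $G$ via the norm-nonincreasing property of projections. Two differences are worth noting. The first is cosmetic: the paper proceeds by induction on the number of layers, splitting $\mathbb{R}^d = \mathbb{R}^h \times \mathbb{R}^m$ at each step and taking $\alpha = \max\{\alpha_h, \alpha_m\}$, whereas you sum over all $k$ layers at once; both routes work and both expose the improved per-layer angle bound remarked after the proposition. The second is substantive: your treatment of condition (ii) is more careful than the paper's. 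The paper's inductive step asserts $\mathbb{E}\|F\|^r \le \mathbb{E}\|F_1\|^r + \mathbb{E}\|F_2\|^r$, but for $r \in \{3,4\}$ this inequality runs the wrong way: writing $a = \|F_1\|^2$, $b = \|F_2\|^2$ and $p = r/2 \ge 1$, one has $(a+b)^p \ge a^p + b^p$ for nonnegative $a,b$, so in fact $\|F\|^r \ge \|F_1\|^r + \|F_2\|^r$, with equality only at $r=2$. Your power-mean/Jensen estimate $\mathbb{E}\|F\|^r \le k^{r/2-1}\sum_{j}\mathbb{E}\|F_j\|^r$ is the correct substitute, and since the factor $k^{r/2-1}$ is a scalar it keeps the bound inside the admissible class of linear combinations of moments of $G$; this is precisely the repair the paper's argument needs (in the inductive formulation, a factor $2^{r/2-1}$ per step). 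Your explicit bookkeeping that the projected benign vectors remain i.i.d.\ with mean $\pi_j(g)$ and that the projected Byzantine vectors remain admissible adversaries, which the paper leaves implicit, is also correct and worth keeping.
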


\begin{proof}
    We will proceed by induction over the number of layers. For the first case, where the number of layers is 1, the result is direct by the fact that $\mathcal{F}$ is $(\alpha, f)$-Byzantine Resilient rule. Moving to the case with $n+1$ layers. Let $\mathbb{E}G = g \in \mathbb{R}^{h+m}$ where $h$ is the dimensionality of the parameters of the $n$ first layers and $m$ of the $n+1$-th layer. Let $g_1$ denote the projection of $g$ into its first $h$ coordinates and $g_2$ into his last $m$ coordinates. Since $\mathcal{F}$ is $(\alpha_m, f)$-Byzantine Resilient and $L\mathcal{F}$ is $(\alpha, f)$-Byzantine Resilient for the case of $n$ layers, we have:
    \begin{equation*}
        \begin{split}
        L\mathcal{F}(V_1, \ldots, V_n) & = (L\mathcal{F}(V_{1, 1}, \ldots, V_{1, 1}), \mathcal{F}(V_{1,2}, \ldots, V_{n,2})) \\
        & = (F_1, F_2) = F
        \end{split}
    \end{equation*}
    with the partition of $R^d=R^h\times R^m$. Then:
    \begin{equation*}
        \langle \mathbb{E}F_1, g_1 \rangle \ge (1 - \sin \alpha_h) || g_1 ||^2.
    \end{equation*}
    For $F_2$ and $g_2$ one should change $\alpha_h$ by $\alpha_m$. Considering $F=(F_1, F_2) \in \mathbb{R}^{h+m}$, we have

    \begin{equation*}
        \begin{split}
        \langle \mathbb{E}F, g \rangle = \langle \mathbb{E}F_1, g_1 \rangle + \langle \mathbb{E}F_2, g_2 \rangle & \ge (1 - sin \alpha)(||g_1||^2 + ||g_2||^2) \\
        &= (1 - sin \alpha)||g||^2
        \end{split}
    \end{equation*}
    where $\alpha=\max \{ \alpha_h, \alpha_m\}$.

    The second property can be render equivalent to the property $\mathbb{E}||\mathcal{F}||^r \le B_r + A_r \mathbb{E}||G||^r$ where $A_r$ and $B_r$ are scalars, with $r=2,3,4$\cite{eon1998, bulyan, krum}. We will proceed again by induction. The case of only one layer is again direct by the property that $\mathcal{F}$ is $(\alpha, f)$-Byzantine Resilient. Then, moving to the case of $n+1$ and keeping notation of the previous part we have:
    \begin{equation*}
    \begin{split}
        \mathbb{E}||F||^r & \le \mathbb{E}||F_1||^r + \mathbb{E}||F_2||^r \\
        & \le (B_1 + A_1\mathbb{E}||g_1||^r) + (B_2 + A_2\mathbb{E}||g_2||^r) \\
        & \le (B_1 + A_1\mathbb{E}||G||^r) + (B_2 + A_2\mathbb{E}||G||^r) \\
        & = (B_1 + B_2) + (A_1 + A_2)\mathbb{E}||G||^r.
    \end{split}
    \end{equation*}
\end{proof}

Now, we remember that the cosine distance between two vector $V_j$ and $V_i$ is defined as the $1 - \cos \theta_{ij}$ where $\theta_{ij}$ denotes the angle form by the two vectors. A well-known result indicates that for normalized data points, ($||V_i||=||V_j||$ for every $i$ and $j$), similar to the data that we obtain after clipping, Euclidean and cosine distances exhibit a monotonic relationship~\cite{quian04}. This monotonicity implies that within normalized data, algorithms that use Euclidean distance to rank data points, for example by proximity, will produce equivalent rankings if cosine distance is used instead. Consequently, for distance-based aggregation operators like Krum, Bulyan, or GeoMed, replacing Euclidean distance with cosine distance should maintain their fundamental behavior and $(\alpha, f)$-Byzantine Resilience. Despite this theoretical expectation of equivalence, it is plausible to anticipate empirical gains from employing cosine distance, as suggested by its successful application in areas such as vector embedding similarity~\cite{llama} due to the sparsity of the update space.

\begin{corollary}\label{th:layercos}
    \revista{The Layerwise Cosine Aggregation of an $(\alpha, f)$-Byzantine Resilient rule which uses euclidean distance for ranking points, is also $(\alpha, f)$-Byzantine Resilient.}
\end{corollary}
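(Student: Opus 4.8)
The plan is to decompose Layerwise Cosine Aggregation into its three constituent modifications --- the layerwise decomposition, the substitution of Euclidean by cosine distance, and median gradient clipping --- and to argue that each preserves $(\alpha, f)$-Byzantine resilience, so that their composition does as well. The layerwise part is already settled: Proposition~\ref{th:layerwise} shows that $L\mathcal{F}$ is $(\alpha, f)$-Byzantine Resilient whenever $\mathcal{F}$ is. It therefore suffices to show that the per-layer operator obtained by clipping the inputs and swapping the distance remains resilient, and then invoke that proposition to lift the result to the full layerwise scheme.

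First I would formalize the effect of median gradient clipping. Writing $\mu$ for the median of the norms $\{||V_1||, \ldots, ||V_n||\}$ and $\tilde V_i = \min(1, \mu/||V_i||)\,V_i$ for the clipped vectors, the key structural fact borrowed from~\cite{LittleIsEnough} is that $\mu$ resides within the benign set, so clipping neither shrinks a majority of honest updates nor distorts the honest sub-population beyond a controlled reparametrization. I would then check the two conditions of Definition~\ref{def:alphafbyz} on the clipped inputs. Condition (ii) becomes almost immediate, since $||\tilde V_i|| \le \mu$ bounds every moment $\mathbb{E}||\tilde V_i||^r$ deterministically, so the required linear-combination bound holds trivially. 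Condition (i) is the delicate one, because clipping perturbs $\mathbb{E}F$ and hence the inner product $\langle \mathbb{E}F, g\rangle$ with the true gradient.

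Next I would invoke the monotonicity between Euclidean and cosine distance on equal-norm data~\cite{quian04}: once the clipped vectors that exceed $\mu$ are brought to the common scale $\mu$, ranking them by cosine distance yields exactly the same ordering as ranking them by Euclidean distance. Since the baseline rule $\mathcal{F}$ --- Krum, Bulyan, or GeoMed --- selects or trims updates purely on the basis of this ordering, the cosine-distance variant produces an output identical to that of the Euclidean operator on the clipped data. The resilience guarantee established for the Euclidean operator on the clipped inputs therefore transfers verbatim to the cosine variant, and composing with Proposition~\ref{th:layerwise} gives the claim.

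The hard part will be the rigorous treatment of condition (i) under clipping, together with the mismatch between ``equal-norm'' (which the monotonicity result of~\cite{quian04} requires) and ``clipped to at most the median'' (which only bounds norms from above). I would bridge this by observing that only the vectors whose norm exceeds $\mu$ are rescaled to norm exactly $\mu$, and that the ranking-based selection is governed by these now equal-norm vectors; controlling the residual contribution of the few honest vectors whose norm falls below $\mu$, and confirming that the clipped expectation still satisfies the angular bound $\langle \mathbb{E}F, g\rangle \ge (1 - \sin\alpha)||g||^2$, is where the substantive work lies.
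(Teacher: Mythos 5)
Your proposal takes essentially the same route as the paper: the paper's entire proof consists of invoking Proposition~\ref{th:layerwise} together with the paragraph preceding the corollary, which argues that clipped updates behave like normalized data, so that the Euclidean/cosine monotonicity of~\cite{quian04} makes the ranking-based operators coincide, and the layerwise lift then follows. The only difference is that the ``hard parts'' you explicitly flag --- verifying condition (i) of Definition~\ref{def:alphafbyz} on clipped inputs, and the mismatch between ``equal norms'' (required by the monotonicity result) and ``norms capped at the median'' (what clipping actually produces) --- are genuine gaps, but the paper does not address them either: it simply asserts the equivalence as immediate, so your plan is no less complete than the published argument and is more candid about where the unproven steps lie.
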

\begin{proof}
    \revista{The proof is direct by applying Proposition \ref{th:layerwise} and taking into account the last paragraph.}
\end{proof}

An important result is that the Layerwise and Layerwise Cosine application of an operator may provide a better theoretical bound for the angle $\alpha$ than the original operator. Given $(\alpha, f)$-byzantine resilient rule $\mathcal{F}$, its layer-wise application $L\mathcal{F}$ given a layer partition $d=m_1 + \ldots + m_k$, takes $\alpha$ as $\max\{\alpha_{m_1}, \ldots, \alpha_{m_k}\}$ where $\alpha_{i}$ is the $\alpha$ when $\mathcal{F}$ is applied to data with dimension $i$. This result is derived from the proof of the byzantine robustness of $L\mathcal{F}$. In the case of Krum (and Bulyan), $\sin\alpha$ is directly proportional to the square root of the data dimension, and thus we are able to maximize the bound of the angle, providing a better theoretical robustness.

\section{Experimental Setup}\label{sec:expsetup}
This section details the experimental setup designed to evaluate the robustness and effectiveness of the Layerwise Cosine Aggregation scheme. To evaluate the proposed method, we employed image classification models trained in various FL-optimized datasets. This section provides the necessary information to replicate our experimental environment, including the following. Section \ref{sec:datasets} outlines the datasets and models employed in each dataset, Section \ref{sec:labelflipping} introduces the label flipping attack, a Byzantine attack used in federated settings, Section \ref{sec:baselines} explains the baselines used in the experiments, and Section \ref{sec:details} provides specific implementation details.  Our experimental design encompasses two distinct scenarios: evaluations conducted in the absence of adversarial clients and evaluations performed within a federated setting deliberately incorporating adversarial clients.

\subsection{Dataset and models}\label{sec:datasets}
For the evaluation of our proposal we have used classical image classification datasets. The datasets used and their federated distributions used are described as follows:

\begin{itemize}
    \item \textbf{CIFAR-10}. This dataset is a labeled subset of the famous 80 million images dataset~\cite{cifar10}, consisting of 60,000 32x32 color images in 10 classes. The training data is evenly distributed between 200 clients.
    \item \textbf{Fashion MNIST}. This dataset~\cite{fashionmnist-2017} aims to be a more challenging replacement for the original MNIST dataset. It contains 28x28 grayscale clothing images from 10 different classes. We set the number of clients to 200.
    \item \textbf{EMNIST Non-IID}. This dataset, presented in 2017 in \cite{emnist}, is an extension of the MNIST dataset~\cite{lecun-1998}. The EMNIST Digits class contains a balanced subset of the digits dataset. This dataset has been federated so that each client corresponds to the digits written by the same author, thus providing a real Non-IID federated setting.
    \item \textbf{EMNIST}. This dataset consists of the EMNIST data set, but where the training data have been federated following an IID distribution between 200 clients.
    \item \textbf{Celeba-S Non-IID}. The CelebA~\cite{celeba} dataset consists of famous face images with 40 binary attribute annotations per image. We use it as a binary image classification dataset, selecting a specific attribute as target, in particular, \textit{Smiling}. We federate the dataset by assigning each famous person a client. Thus, the number of clients is set to more than 8000, but since some clients have a very poor dataset, we only considered clients with more than 30 samples in their local dataset, making the number of clients considered to be 1800.
    \item \textbf{Celeba-S}. This dataset consists on the Celeba-S dataset described above but where the data has been evenly distributed between 200 clients.
\end{itemize}

Regarding the models, we construct a convolutional neural network with 2 convolutional layers and 2 fully-connected layers as the global model for the Fashion MNIST, EMNIST and EMNIST Non-IID datasets. For the CIFAR-10, Celeba-S and Celeba-S Non-IID datasets, we use a pre-trained \texttt{EffectiveNet-B0}~\cite{efficientnet-2019} as the global model. Both models have been trained using an Adam optimizer with a learning rate of 0.001 and 10 epochs per round per client.

\subsection{Label Flipping Byzantine attacks}\label{sec:labelflipping}
For the poisoning attack, we consider a label-flipping-based attack. In this attack, the labels of a randomly selected subset of training data from adversarial clients are modified to contain a random label. This deliberate mislabeling causes adversarial clients to generate virtually random model updates, inducing a degradation in the performance of the global model. 

This attack is often implemented in conjunction with a model replacement technique designed to amplify the impact of adversarial updates. Given a model parameter aggregation rule defined by the following equation:

\begin{equation}\label{eq:updaterule}
    {V_\mathcal{G}}^{t+1}={V_\mathcal{G}}^{t}+ \frac{\eta}{n}\sum_{i=1}^n({V}_i^t - {V_\mathcal{G}}^t)
\end{equation}

where $\eta$ represents the server learning rate (assumed to be one for the remainder of this work), an adversarial client may transmit the following update:

\begin{equation}\label{eq:boost}
    \hat{{V}}_{adv}^t = \beta({V}_{adv}^t - {V_\mathcal{G}}^t)
\end{equation}

where $\beta = \frac{n}{\eta}$. Combining Equations \ref{eq:updaterule} and \ref{eq:boost}, and assuming convergence of benign clients, yields the following approximation.

\begin{equation*}
    {V_\mathcal{G}}^{t+1} \approx {V_\mathcal{G}}^{t} + \frac{\eta}{n} \frac{n}{\eta} ({V}_{adv}^t - {V_\mathcal{G}}^{t}) = {V}_{adv}^t.
\end{equation*}

\subsection{{Baselines}}\label{sec:baselines}
To empirically assess the efficacy of the Layerwise Cosine aggregation rule, we establish a set of baseline aggregation techniques for comparative analysis. We selected three well-established robust aggregation rules as baselines, namely Krum, Bulyan, and GeoMed, which are detailed in the following.

\begin{itemize}
    \item \textbf{Krum}~\cite{krum}: The Krum aggregation rule selects the update from the set of updates submitted $\{V_1, \ldots, V_n\}$ that minimizes the sum of squared Euclidean distances to a subset of other updates. Formally, the Krum operator $KR$ is defined as:
    \begin{equation*}
        KR(V_1, \ldots, V_n) = \underset{i \in \{1, \ldots, n\}}{\operatorname{argmin}} \ s(i)
    \end{equation*}
    where the score function $s(i)$ is given by:
    \begin{equation*}
        s(i) = \sum_{i \to j} \| V_i - V_j \|^2
    \end{equation*}

    \item \textbf{Bulyan}~\cite{bulyan}: The Bulyan aggregation operator is based on the Krum selection process. It first computes the Krum score $s(i)$ for each update $V_i$. Subsequently, it selects a subset $\mathcal{V}$ of $m$ updates with the lowest Krum scores. The aggregated update is then computed as the average of the updates in this selected subset $\mathcal{V}$. Formally, the Bulyan operator $B$ is defined as:
    \begin{equation*}
        B(V_1, \ldots, V_n) = \frac{1}{|\mathcal{V}|}\sum_{V \in \mathcal{V}} V
    \end{equation*}
    where $\mathcal{V} \subseteq \{V_1, \ldots, V_n\}$ is the set of $m$ updates with the lowest scores $s(i)$ as defined by the Krum operator. Following common practice, we set the hyperparameter $m$ to 5 for our experiments.

    \item \textbf{GeoMed}~\cite{bulyan,rousseeuw85}: The Geometric Median (GeoMed) operator identifies the update within the set $\{V_1, \ldots, V_n\}$ that minimizes the sum of squared Euclidean distances to all other updates in the set.  The GeoMed operator $Geo$ is defined as:
    \begin{equation*}
        Geo(V_1, \ldots, V_n) = \underset{V_j \in \{V_1, \ldots, V_n\}}{\operatorname{argmin}} \sum_{i =1}^n \| V_i - V_j \|^2
    \end{equation*}
    Essentially, GeoMed selects the update that is centrally located with respect to the other submitted updates in terms of Euclidean distance.
\end{itemize}

For each of these baseline operators, we consider several variations to provide a comprehensive comparison. These variations include:
\begin{itemize}
    \item \textbf{Original Operator}: The direct application of the standard Krum, Bulyan, or GeoMed operator as defined above.
    \item \textbf{Layerwise Application}:  Applying each operator in a layer-wise manner. In this approach, the aggregation is performed independently for each layer of the neural network architecture utilized in the experiments.
    \item \textbf{Cosine Distance with Median Clipping}:  Modifying the original operator by substituting the Euclidean distance metric with the cosine distance. Additionally, we incorporate median gradient clipping as a preprocessing step before aggregation when employing cosine distance, to further enhance robustness and maintain correct ordering.
    \item \textbf{Full Proposal (Layerwise Cosine)}:  The complete proposed scheme, which integrates both layerwise aggregation and the use of cosine distance in conjunction with median gradient clipping, applied to the chosen baseline operator.
\end{itemize}
This comprehensive set of baselines and their variations allows for a rigorous empirical evaluation of the proposed aggregation scheme.

\subsection{Implementation details}\label{sec:details}
To ensure reproducibility, we provide the code used to run all experiments~\footnote{\url{https://github.com/ari-dasci/S-layerwise_cosine_aggregation}}. The code is written using the FLEXible FL framework~\cite{herrera2024flex}~\footnote{\url{https://github.com/FLEXible-FL/FLEXible}} and its companion library \texttt{flex-clash}~\footnote{\url{https://github.com/FLEXible-FL/flex-clash}} to simulate attacks on the FL scheme and access basic federated aggregation operators; also PyTorch~\cite{pytorch} has been used for implementing the models.

\section{Analysis of Results}\label{sec:analysis}
In this section, we present a comprehensive analysis of the experimental results obtained to demonstrate the validity and robustness of the rules derived from the Layerwise Cosine Aggregation scheme as aggregation mechanisms. We will assess the effectiveness of the proposed method, focusing on test loss and accuracy in image classification models trained in various FL-adapted datasets. This analysis serves to validate the superiority of our Layerwise Cosine Aggregation proposal. We pay particular attention to the loss, examining its equivalence of results and its unbounded nature, which facilitates clearer analysis. The analysis is structured around two distinct scenarios explored in our experiments: the results obtained when no adversarial clients are present, analyzed in Section \ref{sec:noattack}, and the results of federated settings under attack, explored in Section \ref{sec:underattack}.

\subsection{Analysis under no attack}\label{sec:noattack}
The main results of the final test loss and accuracy achieved in the training under different operator variations are displayed in Table \ref{tab:loss}, Table \ref{tab:loss_geomed} and Table \ref{tab:loss_bulyan}, which displays the last test loss, the average of the test loss in the last 10 rounds, the minimum test loss achieved during the training, and the equivalent metrics for the accuracy for Krum, GeoMed and Bulyan, respectively. The results of the original operators are used as the baseline. However, to visualize the training process, the speed of convergence and mitigate the impact of potential overfitting, Figures \ref{fig:loss_no_attack}, \ref{fig:loss_no_attack_geomed}, and \ref{fig:loss_no_attack_bulyan} illustrate the evolution of test loss during training rounds in relevant cases.

\revista{Across all six datasets, our proposed Layerwise Cosine Aggregation method consistently outperforms the standard operators, demonstrating improvements across all evaluated metrics. For example, on the Celeba-S Non-IID dataset, the Layerwise Cosine Krum achieves a substantial accuracy improvement exceeding 16\%, while on CIFAR-10 the Layerwise Cosine Bulyan obtains an improvement over 12\%. Furthermore, Figures \ref{fig:loss_no_attack} and \ref{fig:loss_no_attack_geomed} highlight a key advantage of our approach, particularly evident on the EMNIST Non-IID dataset. Here, the standard aggregators exhibit overfitting, characterized by an increasing test loss over communication rounds. In contrast, layerwise cosine aggregation and its partial applications effectively mitigate this overfitting. We attribute this superior performance to the significant parameter imbalance within the CNN model employed. Specifically, the first dense layer comprises 1.6 million parameters, while the initial convolutional layer contains only approximately 600. The potential overemphasis of the original operator on the dense layer may lead to model selection based primarily on the distribution of parameters within this layer, which may not be globally optimal. Layer-wise aggregation, by selectively aggregating updates for each layer, addresses this issue and yields improved results. Moreover, when selecting updates within the high-dimensional and sparse parameter space of the dense layer (100 samples in a space exceeding 1.6 million parameters), the cosine distance metric proves to be more effective than the Euclidean distance for comparing updates. Finally, our combined method, leveraging both layer-wise aggregation and cosine similarity, improves each individual technique, achieving the best overall performance.}

\revista{This performance trend is consistent across the distinct datasets and operators. In cases with extreme parameter imbalance, such as the aforementioned example, Cosine Aggregation outperforms Layerwise aggregation. This likely occurs because even with layer separation, the dimensionality of the dense layer remains substantial, echoing the challenges faced by standard operators. Consequently, cosine distance proves to be more effective than layer-wise aggregation in such scenarios. In contrast, layer-wise aggregation emerges as a stronger option when using the \texttt{EffectiveNet-B0} architecture. This network, while having a larger total number of parameters, distributes them across several layers. Therefore, layer-wise aggregation effectively reduces the dimensionality of each subproblem, improving performance compared to relying solely on cosine similarity. These results suggest that neither approach is universally superior. Rather, the combination of both techniques in Layerwise Cosine aggregation leverages the strengths of each, leading to impressive performance across diverse scenarios.}

A notable exception to this general trend is observed in the case of Bulyan. As detailed in Table~\ref{tab:loss_bulyan}, the layer-wise aggregation exhibits a performance comparable to that of standard aggregation. In contrast, cosine aggregation demonstrates a significant improvement in all scenarios evaluated. We hypothesize that the enhanced performance of cosine aggregation can be attributed to its unique characteristic among the operators considered: it is the only method that employs multiple local updates in the computation of the global parameter. This increased information utilization may lead to a more balanced contribution from each layer, thereby replicating the performance advantages observed in layerwise aggregation.
\begin{figure}[ht]
    \begin{subfigure}{0.30\linewidth}
        \centering
        \includegraphics[width=\linewidth]{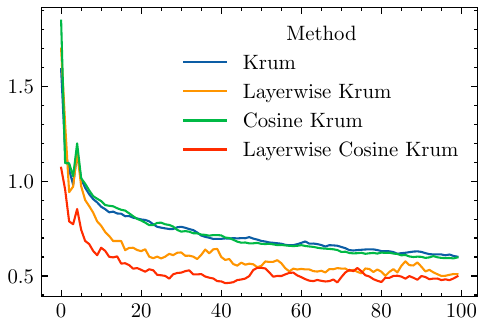}
        \caption{CIFAR-10 dataset.}
    \end{subfigure}
    \begin{subfigure}{0.30\linewidth}
        \centering
        \includegraphics[width=\linewidth]{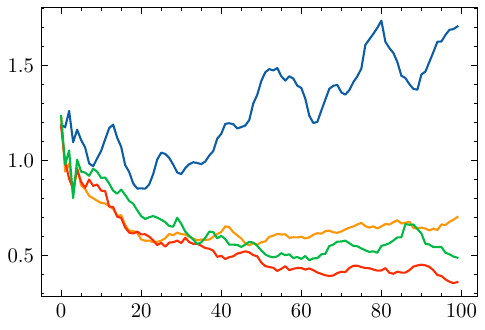}
        \caption{EMNIST Non-IID dataset.}
    \end{subfigure}
    \begin{subfigure}{0.30\linewidth}
        \centering
        \includegraphics[width=\linewidth]{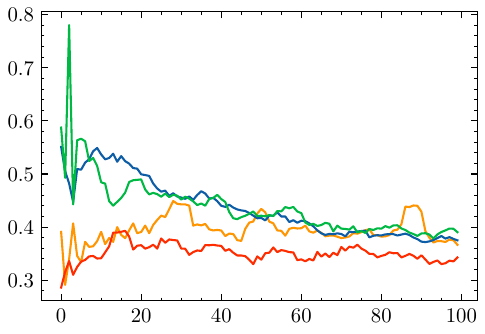}
        \caption{CelebA-S dataset.}
    \end{subfigure}
    \caption{Test Loss in multiple image classification datasets depending on the training round for Krum.}
    \label{fig:loss_no_attack}
\end{figure}

\begingroup
    \centering
    \begin{longtable}{llrrrrr}
    \toprule
    Dataset & Method & Final Test & Average Test & Min. Test & Average Test & Max. Test\\
        & & Loss& Loss & Loss & Accuracy & Accuracy \\
    \midrule
    \multirow{4}{*}{CIFAR-10} & Krum & 0.584 & 0.608 & 0.584 & 0.844 & 0.844 \\
     & Layerwise Krum & \textbf{0.505} & 0.509 & 0.457 & 0.865 & 0.870 \\
     & Cosine Krum & 0.603 & 0.598 & 0.578 & 0.836 & 0.843 \\
    & Layerwise Cosine Krum & 0.516 & \textbf{0.492} & \textbf{0.426} & \textbf{0.871} & \textbf{0.884} \\
    \midrule
     \multirow{4}{*}{CelebA-S}& Krum & 0.373 & 0.377 & 0.361 & 0.913 & 0.917 \\
     & Layerwise Krum & \textbf{0.352} & 0.370 & 0.291 & \textbf{0.920} & 0.920 \\
     & Cosine Krum & 0.385 & 0.388 & 0.364 & 0.912 & 0.915 \\
     & Layerwise Cosine Krum & 0.368 & \textbf{0.340} & \textbf{0.286} & 0.917 & \textbf{0.924} \\
    \midrule
    \multirow{4}{*}{CelebA-S} & Krum & 0.522 & 0.517 & 0.393 & 0.724 & 0.738 \\
      & Layerwise Krum & 0.460 & 0.654 & 0.373 & 0.712 & 0.731 \\
      & Cosine Krum & 0.475 & 0.469 & 0.363 & 0.870 & 0.895 \\
     Non-IID & Layerwise Cosine Krum & \textbf{0.408} & \textbf{0.426} & \textbf{0.317} & \textbf{0.885} & \textbf{0.907} \\
    \midrule
     \multirow{4}{*}{EMNIST}& Krum & 0.244 & 0.244 & 0.087 & 0.984 & 0.985 \\
     & Layerwise Krum & 0.062 & 0.064 & 0.059 & 0.987 & 0.988 \\
     & Cosine Krum & 0.062 & 0.066 & 0.053 & 0.989 & \textbf{0.990} \\
     & Layerwise Cosine Krum & \textbf{0.055} & \textbf{0.053} & \textbf{0.046} & \textbf{0.989} & \textbf{0.990} \\
    \midrule
    \multirow{4}{*}{EMNIST} & Krum & 1.734 & 1.663 & 0.761 & 0.899 & 0.907 \\
     & Layerwise Krum & 0.683 & 0.669 & 0.504 & 0.923 & 0.928 \\
     & Cosine Krum & 0.477 & 0.517 & 0.416 & 0.936 & \textbf{0.944} \\
     Non-IID & Layerwise Cosine Krum & \textbf{0.381} & \textbf{0.380} & \textbf{0.332} & \textbf{0.935} & 0.943 \\
    \midrule
    \multirow{4}{*}{Fashion}& Krum & 1.648 & 1.706 & 0.626 & 0.852 & 0.859 \\
     & Layerwise Krum & 0.613 & 0.604 & 0.491 & 0.870 & 0.874 \\
     & Cosine Krum & 0.754 & 0.637 & 0.468 & 0.878 & 0.880 \\
     MNIST & Layerwise Cosine Krum & \textbf{0.563} & \textbf{0.532} & \textbf{0.419} & \textbf{0.879} & \textbf{0.887} \\
    \bottomrule
    \caption{Test Loss and Accuracy for every method under no attack for Krum.\label{tab:loss}}
\end{longtable}

\endgroup

\begin{figure}[ht]
    \begin{subfigure}{0.30\linewidth}
        \centering
        \includegraphics[width=\linewidth]{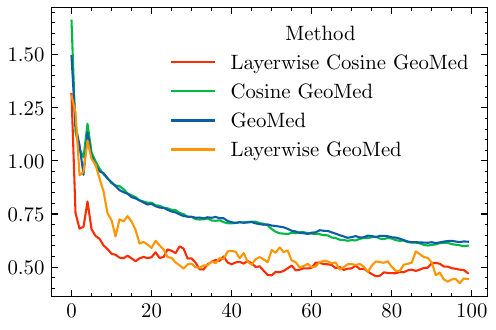}
        \caption{CIFAR-10 dataset.}
    \end{subfigure}
    \begin{subfigure}{0.30\linewidth}
        \centering
        \includegraphics[width=\linewidth]{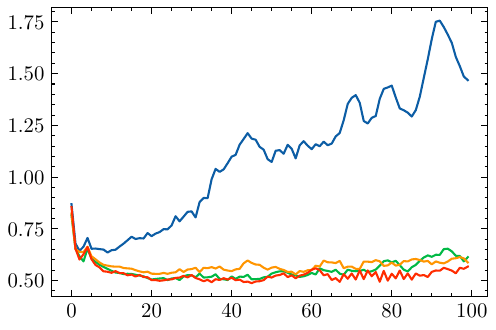}
        \caption{Fashion MNIST dataset.}
    \end{subfigure}
    \begin{subfigure}{0.30\linewidth}
        \centering
        \includegraphics[width=\linewidth]{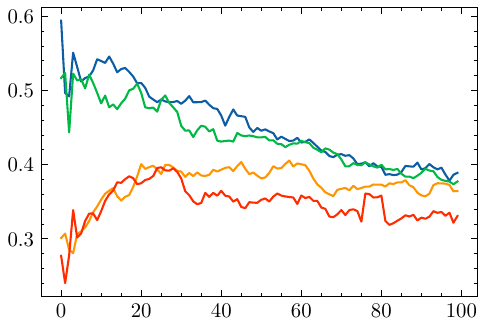}
        \caption{CelebA-S dataset.}
    \end{subfigure}
    \caption{Test Loss in multiple image classification datasets depending on the training round for GeoMed.}
    \label{fig:loss_no_attack_geomed}
    
\end{figure}

\begingroup
    \centering
    
    \begin{longtable}{llrrrrr}
\toprule
 Dataset & Method & Final Test & Average Test & Min. Test & Average Test & Max. Test\\
        & & Loss& Loss & Loss & Accuracy & Accuracy \\
\midrule
\multirow{4}{*}{CIFAR-10} & GeoMed & 0.621 & 0.621 & 0.594 & 0.837 & 0.840 \\
             & Layerwise GeoMed & 0.425 & \textbf{0.437} & \textbf{0.375} & \textbf{0.877} & \textbf{0.892} \\
             & Cosine GeoMed & 0.616 & 0.607 & 0.580 & 0.838 & 0.841 \\
             & Layerwise Cosine GeoMed & \textbf{0.418} & 0.487 & 0.418 & \textbf{0.877} & 0.889 \\
\midrule
\multirow{4}{*}{CelebA-S} & GeoMed & 0.401 & 0.391 & 0.361 & 0.913 & 0.916 \\
 & Layerwise GeoMed & 0.364 & 0.370 & 0.281 & 0.918 & 0.922 \\
 & Cosine GeoMed & 0.389 & 0.380 & 0.359 & 0.912 & 0.914 \\
 & Layerwise Cosine GeoMed & \textbf{0.351} & \textbf{0.333} & \textbf{0.240} & \textbf{0.921} & \textbf{0.924} \\
\midrule
\multirow{4}{*}{CelebA-S} & GeoMed & \textbf{0.409} & 0.528 & 0.382 & 0.871 & 0.891 \\
 & Layerwise GeoMed & 0.661 & 0.575 & \textbf{0.322} & 0.853 & 0.892 \\
 & Cosine GeoMed & 0.756 & 0.570 & 0.373 & 0.856 & 0.892 \\
 Non-IID & Layerwise Cosine GeoMed & 0.592 & \textbf{0.464} & 0.347 & \textbf{0.894} & \textbf{0.909} \\
\midrule
\multirow{4}{*}{EMNIST} & GeoMed & 0.229 & 0.234 & 0.100 & 0.983 & 0.984 \\
 & Layerwise GeoMed & 0.092 & 0.098 & 0.080 & 0.986 & 0.986 \\
 & Cosine GeoMed & 0.068 & 0.064 & 0.051 & 0.989 & 0.990 \\
 & Layerwise Cosine GeoMed & \textbf{0.055} & \textbf{0.050} & \textbf{0.046} & \textbf{0.990} &\textbf{ 0.991} \\
\midrule
\multirow{4}{*}{EMNIST} & GeoMed & 2.670 & 2.018 & 0.729 & 0.891 & 0.907 \\
 & Layerwise GeoMed & 0.725 & 0.643 & 0.476 & 0.931 & 0.936 \\
 & Cosine GeoMed & 0.659 & 0.576 & 0.431 & 0.931 & \textbf{0.939} \\
  Non-IID & Layerwise Cosine GeoMed & \textbf{0.325} & \textbf{0.391} & \textbf{0.325} &\textbf{ 0.933} & 0.937 \\
\midrule
\multirow{4}{*}{Fashion} & GeoMed & 1.496 & 1.579 & 0.604 & 0.855 & 0.860 \\
 & Layerwise GeoMed & 0.538 & 0.590 & 0.482 & 0.869 & 0.872 \\
 & Cosine GeoMed & 0.732 & 0.633 & 0.455 & 0.879 & 0.883 \\
 MNIST & Layerwise Cosine GeoMed & \textbf{0.533} & \textbf{0.561} & \textbf{0.411} & \textbf{0.883} & \textbf{0.887} \\
\bottomrule
    \caption{Test Loss and Accuracy for every method under no attack for GeoMed.\label{tab:loss_geomed}}
    \end{longtable}
    
\endgroup

\begin{figure}[ht]
    \begin{subfigure}{0.30\linewidth}
        \centering
        \includegraphics[width=\linewidth]{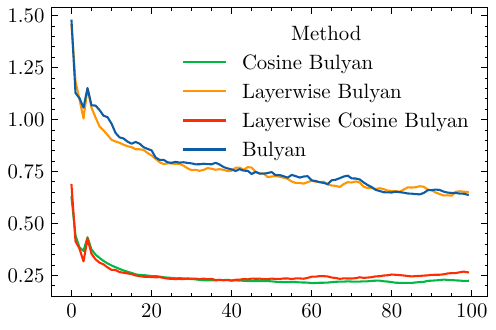}
        \caption{CIFAR-10 dataset.}
    \end{subfigure}
    \begin{subfigure}{0.30\linewidth}
        \centering
        \includegraphics[width=\linewidth]{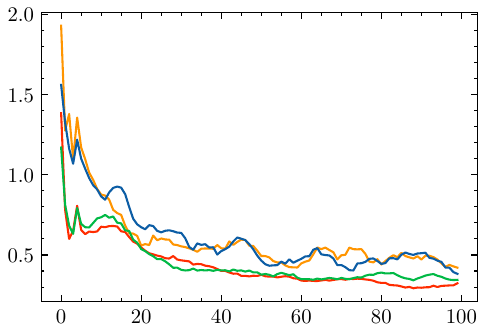}
        \caption{EMNIST Non-IID dataset.}
    \end{subfigure}
    \begin{subfigure}{0.30\linewidth}
        \centering
        \includegraphics[width=\linewidth]{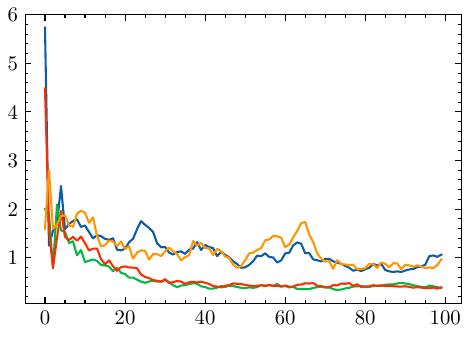}
        \caption{CelebA-S Non-IID dataset.}
    \end{subfigure}
    \caption{Test Loss in multiple image classification datasets depending on the training round for Bulyan.}
    \label{fig:loss_no_attack_bulyan}
\end{figure}

\begingroup
    \centering
    
    \begin{longtable}{llrrrrr}
    \toprule
    Dataset & Method & Final Test & Average Test & Min. Test & Average Test & Max. Test\\
        & & Loss& Loss & Loss & Accuracy & Accuracy \\
    \midrule
    \multirow{4}{*}{CIFAR-10} & Bulyan & 0.641 & 0.643 & 0.622 & 0.828 & 0.833 \\
     & Layerwise Bulyan & 0.628 & 0.644 & 0.616 & 0.828 & 0.831 \\
     & Cosine Bulyan & \textbf{0.230} & \textbf{0.226} & \textbf{0.210} & \textbf{0.946} & \textbf{0.948} \\
     & Layerwise Cosine Bulyan & 0.252 & 0.262 & 0.211 & \textbf{0.946} & \textbf{0.948 }\\
    \midrule
    \multirow{4}{*}{CelebA-S} & Bulyan & 0.405 & 0.388 & 0.367 & 0.910 & 0.915 \\
     & Layerwise Bulyan & 0.378 & 0.379 & 0.361 & 0.912 & 0.914 \\
     & Cosine Bulyan & \textbf{0.351} & \textbf{0.338} & \textbf{0.210} & 0.923 & 0.926 \\
     & Layerwise Cosine Bulyan & 0.396 & 0.384 & 0.221 & \textbf{0.924} & \textbf{0.927} \\
    \midrule
    \multirow{4}{*}{CelebA-S} & Bulyan & 0.952 & 0.940 & 0.535 & 0.810 & 0.865 \\
     & Layerwise Bulyan & 1.242 & 0.891 & 0.523 & 0.789 & 0.864 \\
     & Cosine Bulyan & \textbf{0.264} & \textbf{0.388} & \textbf{0.264} & 0.901 & 0.913 \\
    Non-IID & Layerwise Cosine Bulyan & 0.471 & \textbf{0.388} & 0.308 & \textbf{0.902} & \textbf{0.914} \\
    \midrule
    \multirow{4}{*}{EMNIST} & Bulyan & 0.057 & 0.057 & 0.051 & 0.989 & 0.990 \\
     & Layerwise Bulyan & 0.057 & 0.054 & 0.048 & 0.989 & 0.990 \\
     & Cosine Bulyan & 0.036 & \textbf{0.036} & \textbf{0.034} & \textbf{0.993} & \textbf{0.993} \\
     & Layerwise Cosine Bulyan & \textbf{0.034} & 0.038 & \textbf{0.034} & 0.992 & \textbf{0.993} \\
    \midrule
    \multirow{4}{*}{EMNIST} & Bulyan & 0.360 & 0.423 & 0.311 & 0.930 & 0.945 \\
     & Layerwise Bulyan & 0.411 & 0.446 & 0.341 & 0.930 & 0.942 \\
     & Cosine Bulyan & 0.366 & 0.358 & 0.326 & \textbf{0.955 }& \textbf{0.959} \\
      Non-IID & Layerwise Cosine Bulyan & \textbf{0.352} & \textbf{0.312} & \textbf{0.267} & 0.947 & 0.950 \\
    \midrule
    \multirow{4}{*}{Fashion} & Bulyan & 0.491 & 0.495 & 0.452 & 0.882 & 0.885 \\
     & Layerwise Bulyan & 0.460 & 0.497 & 0.433 & 0.881 & 0.885 \\
     & Cosine Bulyan & 0.473 & \textbf{0.448} & \textbf{0.374} & \textbf{0.905} & \textbf{0.907} \\
     MNIST & Layerwise Cosine Bulyan & \textbf{0.377} & 0.459 & 0.375 & 0.904 & 0.905 \\
    \bottomrule
    \caption{Test Loss and Accuracy for every method under no attack for Bulyan.\label{tab:loss_bulyan}}
\end{longtable}
    
\endgroup

\subsection{Analysis under Byzantine attack}\label{sec:underattack}
\revista{In the second scenario, where {FL} environments are subjected to adversarial attacks, the primary results are summarized in Tables \ref{tab:flip}, \ref{tab:flip_geomed} and \ref{tab:flip_bulyan}, showing the metrics for Krum, GeoMed and Bulyan respectively. Layerwise Cosine Aggregation, and its partial applications, consistently outperforms the baseline operators across all evaluated datasets. As depicted in Figures \ref{fig:underattack}, \ref{fig:underattack_geomed} and \ref{fig:underattack_bulyan} and by comparing these results tables in the previous section, our proposed method achieves performance comparable to that obtained in the absence of adversarial clients. This empirical evidence highlights the strong Byzantine robustness of Layerwise Cosine Aggregation.}

\revista{While a detailed analysis of these results is provided in the previous scenario, it is crucial to reiterate the importance of gradient clipping within Layerwise Cosine Aggregation. Because cosine distance is sensitive to direction but not magnitude, it may incorrectly identify two updates with drastically different norms as similar. This can potentially compromise convergence by selecting updates with excessively large norms. This issue is particularly pronounced in settings employing the \texttt{EfficientNet-B0} model, where high-norm updates in batch normalization layers can render the model ineffective. By incorporating median gradient clipping, we effectively focus on the gradient direction, mitigating this problem.}

\revista{Therefore, and consistent with the results from the previous scenario, our proposed aggregation operator emerges as a more robust and effective choice in the presence of adversarial attacks.}
\begin{figure}[H]
\begin{subfigure}{0.30\linewidth}
    \centering
    \includegraphics[width=\linewidth]{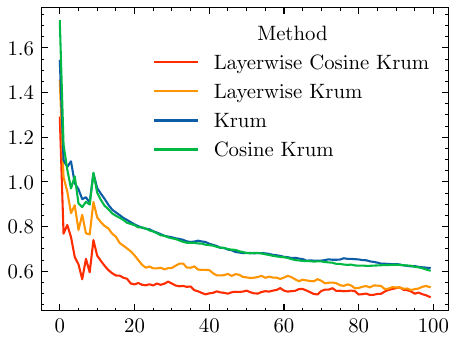}
    \caption{CIFAR-10 dataset.}
\end{subfigure}
\begin{subfigure}{0.30\linewidth}
    \centering
    \includegraphics[width=\linewidth]{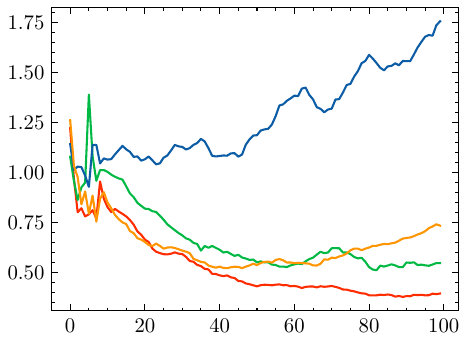}
    \caption{EMNIST Non-IID dataset.}
\end{subfigure}
\centering
\begin{subfigure}{0.30\linewidth}
    \centering
    \includegraphics[width=\linewidth]{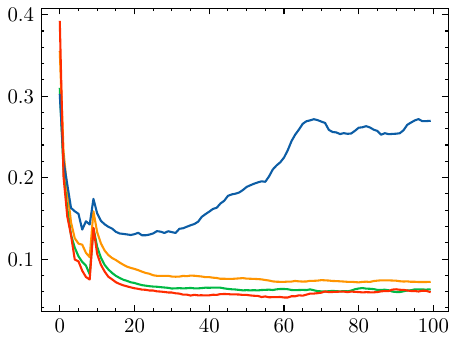}
    \caption{EMNIST dataset.}
\end{subfigure}
\caption{Test Loss under a Label Flipping attack depending on the training round for Krum.}
\label{fig:underattack}
\end{figure}

\begingroup
    \centering
    
    \begin{longtable}{llrrrrr}
    \toprule
        Dataset & Method & Final Test & Average Test & Min. Test & Average Test & Max. Test\\
        & & Loss& Loss & Loss & Accuracy & Accuracy \\
        \midrule
         \multirow{4}{*}{CIFAR-10}& Krum & 0.598 & 0.615 & 0.591 & 0.836 & 0.840 \\
         & Layerwise Krum & \textbf{0.496} & 0.530 & 0.463 & \textbf{0.860} & 0.871 \\
        & Cosine Krum & 0.577 & 0.603 & 0.577 & 0.838 & 0.844 \\
         & Layerwise Cosine Krum & 0.502 & \textbf{0.485} & \textbf{0.420} & 0.846 & \textbf{0.888} \\
        \midrule
         \multirow{4}{*}{CelebA-S}& Krum & 0.379 & 0.389 & 0.365 & 0.914 & 0.917 \\
         & Layerwise Krum & 0.364 & 0.379 & 0.289 & 0.917 & 0.921 \\
        & Cosine Krum & 0.402 & 0.384 & 0.345 & 0.911 & 0.915 \\
         & Layerwise Cosine Krum & \textbf{0.334} & \textbf{0.326} & \textbf{0.245} & \textbf{0.920} & \textbf{0.923} \\
        \midrule
         \multirow{4}{*}{CelebA-S}& Krum & \textbf{0.491} & 0.548 & 0.425 & 0.723 & 0.736 \\
         & Layerwise Krum & 0.556 & 0.668 & 0.353 & 0.841 & 0.887 \\
        & Cosine Krum & 0.772 & 0.548 & 0.350 & 0.857 & 0.896 \\
        Non-IID & Layerwise Cosine Krum & 0.531 & \textbf{0.456} & \textbf{0.338} & \textbf{0.887} & \textbf{0.902} \\
        \midrule
         \multirow{4}{*}{EMNIST}& Krum & 0.268 & 0.270 & 0.115 & 0.982 & 0.983 \\
         & Layerwise Krum & 0.074 & 0.072 & 0.066 & 0.987 & 0.987 \\
        & Cosine Krum & \textbf{0.058} & 0.063 & 0.052 & \textbf{0.989} & \textbf{0.990} \\
         & Layerwise Cosine Krum & \textbf{0.058} & \textbf{0.060} & \textbf{0.047} & 0.988 & 0.989 \\
        \midrule
         \multirow{4}{*}{EMNIST}& Krum & 1.911 & 1.756 & 0.756 & 0.903 & 0.911 \\
         & Layerwise Krum & 0.708 & 0.733 & 0.461 & 0.927 & 0.931 \\
        & Cosine Krum & 0.449 & 0.545 & 0.437 & 0.932 & \textbf{0.944} \\
        Non-IID & Layerwise Cosine Krum & \textbf{0.400} & \textbf{0.393} & \textbf{0.322} & \textbf{0.933} & 0.938 \\
        \midrule
         \multirow{4}{*}{Fashion} & Krum & 1.434 & 1.764 & 0.641 &  0.849 & 0.854 \\
         & Layerwise Krum & 0.642 & 0.653 & 0.492 & 0.865 & 0.869 \\
         & Cosine Krum & 0.681 & 0.644 & 0.484 & \textbf{0.874} & 0.879 \\
        MNIST & Layerwise Cosine Krum & \textbf{0.633} & \textbf{0.555} & \textbf{0.405} & 0.870 & \textbf{0.880} \\
        \bottomrule
    \caption{Test Loss and Accuracy for every method under the label flipping attack for Krum.    \label{tab:flip}}
        \end{longtable}

\endgroup

\begin{figure}[H]
\begin{subfigure}{0.30\linewidth}
    \centering
    \includegraphics[width=\linewidth]{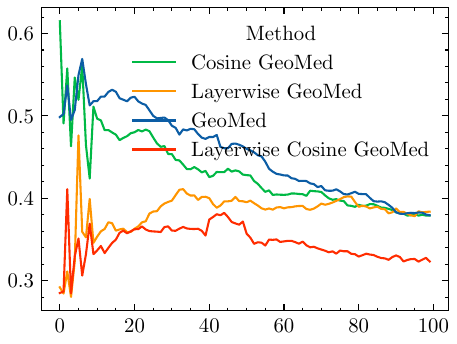}
    \caption{Celeba-S dataset.}
\end{subfigure}
\begin{subfigure}{0.30\linewidth}
    \centering
    \includegraphics[width=\linewidth]{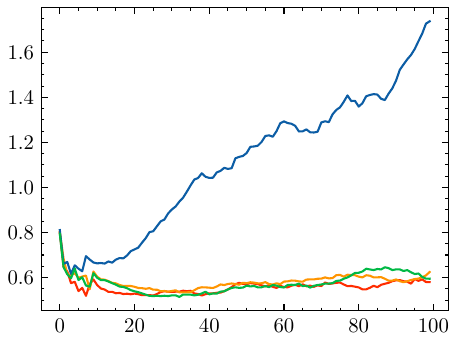}
    \caption{Fashion MNIST dataset.}
\end{subfigure}
\centering
\begin{subfigure}{0.30\linewidth}
    \centering
    \includegraphics[width=\linewidth]{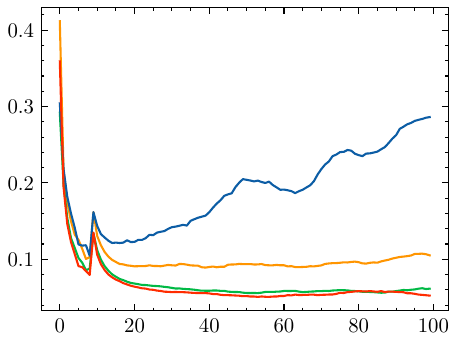}
    \caption{EMNIST dataset.}
\end{subfigure}
\caption{Test Loss under a Label Flipping attack depending on the training round for GeoMed.}
\label{fig:underattack_geomed}
\end{figure}

\begingroup
    \centering
    
   \begin{longtable}{llrrrrr}
    \toprule
    Dataset & Method & Final Test & Average Test & Min. Test & Average Test & Max. Test\\
        & & Loss& Loss & Loss & Accuracy & Accuracy \\
    \midrule
    \multirow{4}{*}{CIFAR-10} & GeoMed & 0.598 & 0.607 & 0.587 & 0.838 & 0.841 \\
    & Layerwise GeoMed & 0.585 &\textbf{ 0.499} & \textbf{0.399} & 0.865 & 0.881 \\
    & Cosine GeoMed & 0.622 & 0.602 & 0.588 & 0.839 & 0.841 \\
    & Layerwise Cosine GeoMed & \textbf{0.534} & 0.509 & 0.408 & \textbf{0.872} & \textbf{0.888} \\
    \midrule
    \multirow{4}{*}{CelebA-S} & GeoMed & 0.374 & 0.379 & 0.356 & 0.913 & 0.917 \\
    & Layerwise GeoMed & 0.385 & 0.384 & \textbf{0.280} & 0.918 & 0.919 \\
    & Cosine GeoMed & 0.357 & 0.379 & 0.357 & 0.912 & 0.916 \\
    & Layerwise Cosine GeoMed &\textbf{ 0.316} & \textbf{0.323} & 0.285 & \textbf{0.921} &\textbf{ 0.924} \\
    \midrule
    \multirow{4}{*}{CelebA-S} & GeoMed & 0.563 & 0.528 & 0.382 & 0.876 & 0.897 \\
    & Layerwise GeoMed & \textbf{0.304} & 0.846 & \textbf{0.304} & 0.805 & 0.890 \\
    & Cosine GeoMed & 0.493 & 0.571 & 0.386 & 0.858 & 0.890 \\
   Non-IID & Layerwise Cosine GeoMed & 0.456 & \textbf{0.444} & 0.319& \textbf{0.897} & \textbf{0.911} \\
    \midrule
    \multirow{4}{*}{EMNIST} & GeoMed & 0.297 & 0.286 & 0.105 & 0.982 & 0.983 \\
    & Layerwise GeoMed & 0.095 & 0.105 & 0.078 & 0.985 & 0.986 \\
    & Cosine GeoMed & 0.061 & 0.062 & 0.052 & \textbf{0.989} & \textbf{0.990} \\
    & Layerwise Cosine GeoMed & \textbf{0.051} & \textbf{0.052} & \textbf{0.046} & \textbf{0.989} & \textbf{0.990} \\
    \midrule
    \multirow{4}{*}{EMNIST} & GeoMed & 1.165 & 1.654 & 0.772 & 0.897 & 0.910 \\
    & Layerwise GeoMed & 0.555 & 0.598 & 0.476 & 0.927 & 0.937 \\
    & Cosine GeoMed & 0.708 & 0.544 & 0.458 & \textbf{0.933} & \textbf{0.943} \\
   Non-IID & Layerwise Cosine GeoMed & \textbf{0.378} &\textbf{ 0.376} & \textbf{0.338} & \textbf{0.933} & 0.940 \\
    \midrule
    \multirow{4}{*}{Fashion} & GeoMed & 1.707 & 1.737 & 0.600 & 0.851 & 0.858 \\
    & Layerwise GeoMed & 0.758 & 0.625 & 0.500 & 0.867 & 0.871 \\
    & Cosine GeoMed & \textbf{0.564} & 0.595 & 0.458 & \textbf{0.876} & \textbf{0.880} \\
   MNIST & Layerwise Cosine GeoMed & 0.663 & \textbf{0.580} & \textbf{0.450} & 0.868 & 0.878 \\
    \bottomrule
    \caption{Test Loss and Accuracy for every method under the label flipping attack for GeoMed.\label{tab:flip_geomed}}
    \end{longtable}

\endgroup

\begin{figure}[H]
\begin{subfigure}{0.30\linewidth}
    \centering
    \includegraphics[width=\linewidth]{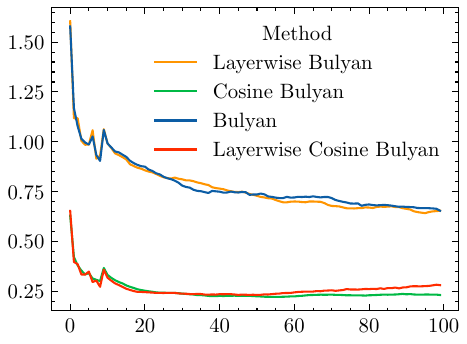}
    \caption{CIFAR-10 dataset.}
\end{subfigure}
\begin{subfigure}{0.30\linewidth}
    \centering
    \includegraphics[width=\linewidth]{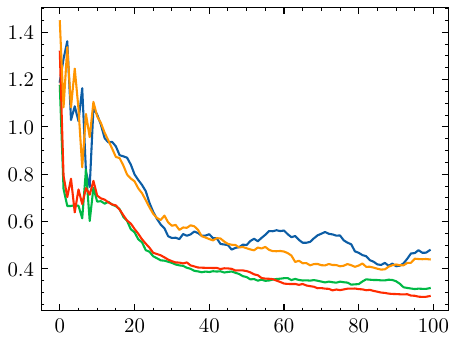}
    \caption{EMNIST Non-IID dataset.}
\end{subfigure}
\centering
\begin{subfigure}{0.30\linewidth}
    \centering
    \includegraphics[width=\linewidth]{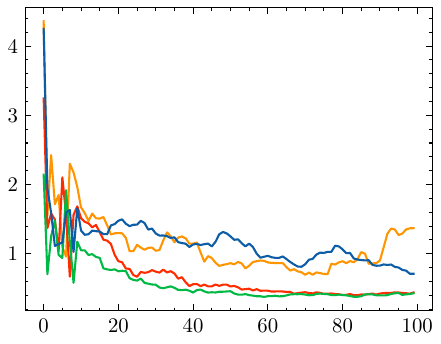}
    \caption{Celeba-S Non-IID dataset.}
\end{subfigure}
\caption{Test Loss under a Label Flipping attack depending on the training round for Bulyan.}
\label{fig:underattack_bulyan}
\end{figure}

\begingroup
    \small
    \centering
    
    \begin{longtable}{llrrrrr}
    \toprule
     Dataset & Method & Final Test & Average Test & Min. Test & Average Test & Max. Test\\
        & & Loss& Loss & Loss & Accuracy & Accuracy \\
    \midrule
    \multirow{4}{*}{CIFAR-10} & Bulyan & 0.648 & 0.655 & 0.637 & 0.826 & 0.829 \\
    & Layerwise Bulyan & 0.655 & 0.655 & 0.626 & 0.827 & 0.831 \\
    & Cosine Bulyan & \textbf{0.229} & \textbf{0.234} & \textbf{0.217} & \textbf{0.946} & \textbf{0.948} \\
    & Layerwise Cosine Bulyan & 0.287 & 0.283 & 0.222 & 0.943 & 0.947 \\
    \midrule
    \multirow{4}{*}{CelebA-S} & Bulyan & 0.373 & 0.386 & 0.364 & 0.910 & 0.915 \\
    & Layerwise Bulyan & 0.410 & 0.411 & 0.371 & 0.911 & 0.914 \\
    & Cosine Bulyan & \textbf{0.369} & \textbf{0.353} & \textbf{0.220} & \textbf{0.923} &\textbf{ 0.926} \\
    & Layerwise Cosine Bulyan & 0.383 & 0.388 & 0.232 & 0.921 & \textbf{0.926} \\
    \midrule
    \multirow{4}{*}{CelebA-S} & Bulyan & 0.686 & 0.711 & 0.443 & 0.829 & 0.869 \\
    & Layerwise Bulyan & 0.942 & 1.369 & 0.476 & 0.757 & 0.861 \\
    & Cosine Bulyan & \textbf{0.415} & \textbf{0.429 }& \textbf{0.260} & \textbf{0.904} & 0.915 \\
   Non-IID & Layerwise Cosine Bulyan & 0.513 & 0.440 & 0.322 & 0.899 & \textbf{0.916} \\
    \midrule
    \multirow{4}{*}{EMNIST} & Bulyan & 0.042 & 0.056 & 0.042 & 0.989 & 0.990 \\
    & Layerwise Bulyan & 0.060 & 0.059 & 0.051 & 0.989 & 0.990 \\
    & Cosine Bulyan & \textbf{0.034} &\textbf{ 0.037} & \textbf{0.034} & \textbf{0.993} & \textbf{0.993} \\
    & Layerwise Cosine Bulyan & 0.039 & 0.039 & \textbf{0.034} & 0.992 & \textbf{0.993} \\
    \midrule
    \multirow{4}{*}{EMNIST} & Bulyan & 0.570 & 0.479 & 0.316 & 0.929 & 0.941 \\
    & Layerwise Bulyan & 0.443 & 0.440 & 0.321 & 0.932 & 0.942 \\
    & Cosine Bulyan & 0.373 & 0.319 & 0.292 & \textbf{0.959} & \textbf{0.962} \\
   Non-IID & Layerwise Cosine Bulyan & \textbf{0.298} & \textbf{0.285} & \textbf{0.236} & 0.950 & 0.955 \\
    \midrule
    \multirow{4}{*}{Fashion} & Bulyan & 0.536 & 0.501 & 0.455 & 0.882 & 0.885 \\
    & Layerwise Bulyan & 0.481 & 0.476 & 0.428 & 0.883 & 0.886 \\
    & Cosine Bulyan & 0.568 & 0.495 & 0.384 & \textbf{0.901} & \textbf{0.902} \\
    MNIST & Layerwise Cosine Bulyan & \textbf{0.438} & \textbf{0.488} & \textbf{0.378} & 0.899 & \textbf{0.902} \\
    \bottomrule
    \caption{Test Loss and Accuracy for every method under the label flipping attack for Bulyan.\label{tab:flip_bulyan}}
    \end{longtable}
\endgroup

\section{Conclusions and Future Work}\label{sec:conclusions}
\revista{Due to the distributed nature of FL, mitigating Byzantine attacks has become a critical area of research. The introduction of $(\alpha, f)$-Byzantine resilient operators represented a significant advancement in the field, both theoretically and empirically. However, with continued use, its limitations have become apparent. This work addresses these limitations by exploring the theoretical properties and proposing Layerwise Cosine {aggregation rule}, an improved aggregation scheme that introduces minimal computational overhead. The principal contributions of this work are as follows:}
\begin{itemize}
    \item Show, both theoretically and empirically, a discernible decrease in performance for robust aggregation rules based on $(\alpha, f)$-Byzantine resilience when applied to high-dimensional data. This observation reveals a significant performance gap between robust and effective FL models in such settings, highlighting a critical challenge in the field.
    \item To propose the layer-wise aggregation, while preserving the theoretical properties of Byzantine resilience, allows us to derive a tighter theoretical bound on the expected angle, leading to improved robustness against Byzantine attacks.
    \item To combine layer-wise aggregation with the use of cosine distance, we obtain a theoretically and empirically significantly improved robust aggregation scheme for certain operators: Layerwise Cosine Aggregation.
\end{itemize}

Our findings demonstrate the broad applicability of our proposed scheme, exemplified by its successful enhancement of widely adopted operators such as Krum, Bulyan, and GeoMed. Across all scenarios studied, our approach consistently yielded performance improvements, indicating a promising step toward bridging robust and effective privacy-conscious training paradigms. This work also highlights the potential for revisiting the foundations of well-established methods, enabling significant improvements across multiple related areas. 

\textbf{Future Work.} Although this study has demonstrated the effectiveness of Layerwise Cosine Aggregation in improving robustness under Byzantine scenarios within image classification tasks, several promising directions remain open for exploration. Future work may extend this framework to other data modalities such as natural language processing and time series forecasting, where high-dimensional and sparse representations are common. Furthermore, deploying and evaluating the method in real-world federated environments, where device availability, communication constraints, and heterogeneous hardware introduce further complexity, would offer valuable practical insights. Finally, exploring adaptive or learnable layer-wise distance metrics, beyond static cosine similarity, could further enhance the flexibility and effectiveness of the aggregation process across diverse neural architectures.


\section*{Acknowledgments}
This research results of the Strategic Project IAFER-Cib (C074/23), as a result of the collaboration agreement signed between the National Institute of Cybersecurity (INCIBE) and the University of Granada. This initiative is carried out within the framework of the Recovery, Transformation, and Resilience Plan funds, financed by the European Union (Next Generation).
\bibliographystyle{unsrt}  
\bibliography{references}

\end{document}